\documentclass{svproc}
%
%
\usepackage{graphicx}%
\usepackage{multirow}%
\usepackage{amsmath,amssymb,amsfonts}%
\usepackage{mathrsfs}%
\usepackage[title]{appendix}%
\usepackage{xcolor}%
\usepackage{textcomp}%
\usepackage{manyfoot}%
\usepackage{booktabs}%
\usepackage{algorithm}%
\usepackage{algorithmicx}%
\usepackage{algpseudocode}%
\usepackage{listings}%
\usepackage{todonotes}
\usepackage{url}
\newcommand{\Otilde}{\tilde{O}}

\usepackage{makecell}

\begin{document}
\mainmatter              
\title{Deep Distance Sensitivity Oracles}
\titlerunning{Deep Distance Sensitivity Oracles}  
%
\author{Davin Jeong\inst{1} \and Allison Gunby-Mann\inst{2} \and Sarel Cohen\inst{3} \and Maximilian Katzmann \inst{4} \and Chau Pham \inst{5} \and Arnav Bhakta \inst{6} \and Tobias Friedrich\inst{3} \and Peter Chin\inst{2} }
\authorrunning{Davin Jeong et al.} 

\institute{Harvard University, Cambridge, MA 02138, \\
\email{djeong@college.harvard.edu} 
\and
Dartmouth College, Hanover, NH 03755,\\
\email{allisonmann.th@dartmouth.edu, peter.chin@dartmouth.edu}
\and 
Hasso Plattner Institute, Potsdam, Germany,  \\
\and 
Karlsruhe Institute of Technology, Karlsruhe, Germany
\and
Boston University, Boston MA, 02215
\and 
Yale University, New Haven, CT, 06520
}

\maketitle              

\begin{abstract}

Shortest path computation is one of the most fundamental and well-studied problems in algorithmic graph theory, though it becomes more complex when graph components are susceptible to failure. This research utilizes a \emph{Distance Sensitivity Oracle (DSO)} for efficiently querying replacement paths in graphs with potential failures to avoid inefficiently recomputing them after every outage with traditional techniques. By leveraging technologies such as node2vec, graph attention networks, and multi-layer perceptrons, the study pioneers a method to identify pivot nodes that lead to replacement paths closely resembling optimal solutions with deep learning. Tests on real-world network demonstrate replacement paths that are longer by merely a few percentages compared to the optimal solution. 

\keywords{GNNs, Graph Algorithms, Combinatorial Optimization, Shortest Paths, Distance Sensitivity Oracles, Learned Data-Structures}
\end{abstract}

\section{Introduction}

The shortest path problem is frequently encountered in the real-world. In road networks, users want to know how long it will take to get from one place to another \cite{huang2021learning}. In biological networks, consisting of genes and their products, the shortest paths are used to find clusters and identify core pathways \cite{ren2018shortest}. In social networks, the number of connections between users can be used for friend recommendation \cite{tian2012shortest}. In web search, relevant web pages can be ranked by their distances from queried terms \cite{ukkonen2008searching}. 

For graphs in the real world, often consisting of millions of nodes, special data structures called {\em Distance Oracles} (DO) are used to store information about distances of an input graph $G=(V,E)$ with $n$ vertices and $m$ edges. Without storing the entire graph, they can quickly retrieve important distance information to answer the shortest path queries. 
These shift the computational burden to the preprocessing step, so that queries can be answered quickly.

However, in addition to being large in size, real-world networks are also frequently susceptible to failures. For example, in road networks, a construction, a traffic accident, or an event might temporarily block nodes. In social networks, users might temporarily deactivate their accounts, resulting in a null node. And on the internet, web servers may be temporarily down due to mechanical failures or malicious attacks \cite{billand2016network}. In these instances, we desire a method that can continue answering shortest path queries without stalling or having to recompute shortest paths on the entire graph again.

{\em Distance Sensitivity Oracles} (DSO) are a type of DO that can respond to queries of the form $(s, t, f)$, requesting the shortest path between nodes $s$ and $t$ when a vertex $f$ fails and is thus unavailable. Desirable DSOs should provide reasonable trade-offs among space consumption, query time, and {\em MRE} (i.e., quality of the estimated distance). In this paper, we consider the simplest case, in which there is only one failed node.

\subsection{Contributions}

We have tested our method on a variety of real-world networks and achieved state-of-the-art performance on all of them, our accuracy even outperforms models for shortest paths without node failures. Our contributions mainly lie in three aspects: 
\begin{itemize}
\item In our theoretical analysis, we first present a simple proof for the existence of an underlying combinatorial structure for replacement paths: specifically, the existence of pivot nodes.
\item We observe that one can use deep learning to find pivot nodes in distance sensitivity oracles. In fact, to the best of our knowledge, we are the first to use deep learning to build a distance sensitivity oracle. 
\item We empirically evaluate our method and compare it with related works to demonstrate near-exact accuracy across a diverse range of real-world networks.

\end{itemize}

\subsection{Related Work}

Given we are the first to propose a deep learning approach to DSOs, we describe previous works in both DSOs and deep learning in this section. 

\subsubsection{Distance-Sensitivity Oracles}

The problem of constructing a DSO is well-studied in the theoretical computer science community. Demetrescu {\sl et al.} \cite{DemetrescuTCR08} showed that given a graph $G = (V, E)$, there is a DSO which occupies $O(n^2 \log n)$ space, and can answer a query in constant time. The preprocessing of this DSO, that is the time it takes to construct this DSO, is $O(mn^2+n^3 \log n)$.  

Several theoretical results attempted to improve the preprocessing time required by the DSO. Bernstein and Karger \cite{BeKa09} improved this time bound to $\Otilde(mn)$ \footnote{%
	For a non-negative function $f = f(n)$,
	we use $\Otilde(f)$ to denote $O(f \cdot \textsf{polylog}(n))$.
}. 
Note that the All-Pairs Shortest Paths (APSP) problem, which only asks the distances between each pair of vertices $u, v$, is conjectured to require $mn^{1-o(1)}$ time to solve \cite{LincolnWW18}. Since we can solve the APSP problem by using a DSO, by 
querying it with (s, t, emptyset) for every $s, t$, the preprocessing time $\Otilde(mn)$ is theoretically asymptotically optimal in this sense, up to a polylogarithmic factor (note that, in practice, such polylogarithmic factors may be very large). 
Several additional results improved upon the theoretical preprocessing time by using fast matrix multiplication \cite{ChechikC20,Ren22,GuR21}.

With respect to the size of the oracle, Duan and Zhang \cite{DuZh17} improved the space complexity of \cite{DemetrescuTCR08} to $O(n^2)$, which is from a theoretical perspective asymptotically optimal for dense graphs $(i.e., m = \Theta(n^2))$. To do so, Duan and Zhang store multiple data-structures, which is reasonable for a theoretical work, however from a practical perspective the hidden constant is large. Therefore, it may also be interesting to consider DSOs with smaller space, at the cost of an approximate answer.

Here are several DSOs that provide tradeoffs between the size of DSO and the stretch (the length reported divided by the actual length):

\begin{itemize}
\item The DSO described in \cite{BaswanaK13}, for every parameter $\epsilon > 0 $ and integer $k \ge 1$ has stretch $(2k-1)(1+\epsilon)$ and size $O(k^5n^{1+1/k}\log^3 n/\epsilon^4)$.  
\item The DSO described in \cite{CLPR10}, for every integer parameter $k \ge 1$ has stretch $(16k-4)$ and size $O(kn^{1+1/k} \log n)$. 
\end{itemize}

Note that even though the size of the above two DSOs for $k \ge 2$ is asymptotically smaller than $O(n^2)$, the stretch guarantee is at least $3$ in \cite{BaswanaK13} and at least $28$ in \cite{CLPR10}, which is far from the optimum and may not be practical in many applications. 

In this work, we construct the first DSO that is built using deep learning. Our method uses deep learning to find pivot nodes (as described in Section~\ref{sec:approach}), utilizing a combintorial structural property we observe in Section~\ref{sec:theory}, computing near optimal paths as shown in Section~\ref{sec:results}.

\subsubsection{Shortest Paths using Deep Learning}

Previous works towards answering shortest path queries typically employ a two-stage solution: 1) representation learning and 2) distance prediction. 

In general, graph embeddings are used to generate low-dimensional representations of nodes and edges which preserve properties of the original graph \cite{cai2018comprehensive}. Methods include  matrix factorization, deep learning with and without random walks, edge reconstruction, graph kernels, and generative models \cite{cai2018comprehensive}. By either optimizing embeddings for their specific task or using general embedding techniques like node2vec, these graph embeddings may then be combined with existing techniques to tackle tasks such as node classification, node clustering, and link detection \cite{zhang2021graphnet,crichton2018neural}.

Among the first to apply graph embeddings to the shortest paths problem was Orion \cite{wang2002orion}. Inspired by the successes of virtual coordinate systems, a landmark labelling approach was employed, where positions of all nodes were chosen based on their relative distances to a fixed number of landmarks. Using the Simplex Downhill algorithm, representations were found in a Euclidean coordinate space, allowing constant time distance calculations and producing mean relative error (MRE) between 15\% - 20\% \cite{wang2002orion}. Other existing coordinate systems have also been used. Building off of network routing schemes in hyperbolic spaces, Rigel used a hyperbolic graph coordinate system to reduce the MRE to 9\% and found that the hyperbolic space performed empirically better across distortion metrics than Euclidean and spherical coordinate systems \cite{cvetkovski2009hyperbolic,zhao2011efficient}. In road networks, geographical coordinates have been utilized with a multi-layer perceptron to predict distances between locations with 9\% MRE \cite{jindal2017unified}. 

In addition to these coordinate systems, general graph embedding techniques have recently been employed to handle shortest path queries to great success. In 2018, researchers from the University of Passau proposed node2vec-Sg\cite{rizi2018shortest}. To find the shortest path between nodes $s$ and $t$, their Node2vec and Poincare embeddings were combined through various binary operations and fed into a feed-forward neural network, which was trained only on the distances between $l$ landmark nodes $l<<n$ and the rest of the graph. The model which took concatenated Node2vec embeddings performed the best, with an MRE between 3\% to 7\% .

Researchers have also demonstrated the accuracy of graph embeddings learned alongside distance predictors, to produce representations more specific to the shortest path task. Vdist2vec directly learned vertex embeddings by passing the gradient from the distance predictor back to a $N \times k$ matrix, achieving an MRE between 1\% to 7\% \cite{qi2020learning}. Huang et al. computed shortest path distances on road networks using a hierarchical embedding model and achieved an MRE of 0.7\% \cite{huang2021learning}. Most recently, ndist2vec built upon the landmark learning, graph embedding, and neural network aspects of all of these approaches, reporting an MRE of $3.4\%$ with a dataset on the order of $O(n)$. 

Current works for estimating the shortest path lengths between two nodes are limited by the representations they learn. They rely on datasets which, even using schemas like landmark labelling or hierarchical, are proportional to $n$, the number of nodes in the network\cite{chen2022ndist2vec,huang2021learning}. This presents a significant bottleneck for larger graphs. Taking these lessons to the deep learning DSO task, we present a model in Section~\ref{sec:approach}, which extracts signal more efficiently thus requiring training samples without sacrificing accuracy. 

\section{Theoretical Analysis}
\label{sec:theory}
In this section we consider a combinatorial structural property of replacement paths: such a path is a concatenation of a few original shortest paths. As we previously described, later on our deep learning algorithm builds on this lemma.

Our research is motivated by the following lemma from Afek {\sl et al.}.

\begin{lemma}\cite{MPLS}
\label{lem:k-fails}
After $k$ edge failures in an unweighted graph, each new shortest path is the concatenation of at most $k + 1$ original shortest paths.
\end{lemma}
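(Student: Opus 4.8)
The plan is to fix one replacement path, cut it greedily from the source into maximal pieces that are already shortest paths in $G$, and then show that each cut is forced by a distinct edge failure. Fix a shortest $s$--$t$ path $P=(v_0=s,v_1,\dots,v_\ell=t)$ in the graph $G$ with the $k$ failed edges deleted, and recall the elementary fact that every contiguous sub-path of a shortest path is again a shortest path between its own endpoints in the same graph. Decompose $P$ greedily from the left: let $P_1$ be the longest prefix of $P$ that is a shortest path in $G$, then the longest prefix starting at the last vertex of $P_1$ that is a shortest path in $G$, and so on, obtaining pieces $P_1,\dots,P_r$ that meet at seam vertices $u_1,\dots,u_{r-1}$. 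In an unweighted graph a single edge is always a shortest path, so this process always advances and terminates, producing a valid decomposition of $P$ into shortest paths of $G$; it therefore suffices to prove $r\le k+1$, that is, that there are at most $k$ seams.

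The local observation is that each seam is caused by a failure. Let $u_p$ be a seam and let $u_p^{+}$ be the vertex immediately after $u_p$ on $P$. By maximality of the piece that ends at $u_p$, the sub-path $D_p$ of $P$ from $u_{p-1}$ to $u_p^{+}$ (that piece extended by one edge) is \emph{not} a shortest path in $G$, whereas it \emph{is} a shortest path in $G\setminus F$ since it is a sub-path of $P$. Therefore $\dist_G(u_{p-1},u_p^{+})$ is strictly smaller than $|D_p|=\dist_{G\setminus F}(u_{p-1},u_p^{+})$, and hence every shortest $u_{p-1}$-to-$u_p^{+}$ path in $G$ must traverse at least one failed edge, for otherwise it would be a path in $G\setminus F$ contradicting the distance there. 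So each of the $r-1$ seams can be charged to some failed edge.

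The heart of the argument --- and the step I expect to be the main obstacle --- is to make these $r-1$ charges land on \emph{pairwise distinct} failed edges, which yields $r-1\le|F|=k$. I would argue this by induction on $k$. If $P$ is already a shortest path in $G$ there is nothing to prove; otherwise, peel off the first piece $P_1$ together with one failed edge $e^{\star}$ lying on a shortest-in-$G$ path that witnesses the first seam, and show that the tail $P[u_1\dots t]$ is still a shortest $u_1$-to-$t$ path in the graph with the remaining $k-1$ failures; the induction hypothesis then decomposes the tail into at most $k$ shortest paths of $G$, giving at most $k+1$ for $P$. The delicate step is to choose $e^{\star}$ so that putting it back does not create a shortcut for the tail; I would establish this by contradiction: a hypothetical shortcut of $u_1$-to-$t$ through $e^{\star}$, combined with the short $G$-path that witnesses the first seam and passes through $e^{\star}$, can be stitched into an $s$--$t$ walk inside $G\setminus F$ that is strictly shorter than $P$, contradicting optimality of $P$.

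The base case $k=0$ is immediate since $P$ is then a shortest path in $G$, and unweightedness is used exactly so that single edges count as shortest paths and $\dist_G$ changes by at most one along each edge of $P$. The bound $k+1$ is tight: on a chain of $k$ triangles sharing corners --- vertices $x_0,\dots,x_k$ and $m_1,\dots,m_k$ with edges $(x_{i-1},x_i)$, $(x_{i-1},m_i)$, $(m_i,x_i)$ --- deleting the $k$ edges $(x_{i-1},x_i)$ leaves the unique $x_0$--$x_k$ path $x_0 m_1 x_1 m_2 \cdots m_k x_k$, whose greedy decomposition consists of exactly $k+1$ shortest paths of $G$ and no fewer.
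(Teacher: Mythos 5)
Your greedy decomposition, and the observation that each seam forces every shortest $G$-path between $u_{p-1}$ and $u_p^{+}$ to use a failed edge, are both correct; and for $k=1$ your stitching argument is essentially the proof the paper gives for its Lemma~2 (the paper does not reprove the general statement at all -- it cites Afek et al.\ and only proves the single-failure case). The gap is exactly at the step you flag as the heart: the claim that one can pick a failed edge $e^{\star}$ on a witness path $W$ so that the tail $P[u_1..t]$ is still a shortest $u_1$--$t$ path in $G-(F\setminus\{e^{\star}\})$. Your contradiction needs the two segments of $W$ on either side of $e^{\star}$ to avoid \emph{all} of $F$, which is automatic only when $W$ contains a single failed edge -- i.e.\ essentially only in the $k=1$ case. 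When every witness contains two or more failed edges the stitched walk is not a walk in $G\setminus F$, and in fact the peeling claim itself is false: there need not exist \emph{any} $e^{\star}\in F$ for which the tail is shortest with the remaining $k-1$ failures, so the induction hypothesis cannot be applied to the tail at all.

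Concretely, take $k=2$, $f_1=(x_1,y_1)$, $f_2=(x_2,y_2)$, and build $G$ from internally disjoint pieces: an $s$--$u_1$ path of length $5$, the edge $(u_1,u_1^{+})$, a $u_1^{+}$--$t$ path of length $7$ (these three form $P$, of length $13$); the single edges $(s,x_1)$, $(y_1,x_2)$, $(y_2,u_1^{+})$, $(u_1,y_2)$; a $u_1$--$x_1$ path of length $4$, a $y_1$--$t$ path of length $2$, an $x_2$--$t$ path of length $5$; and the two failed edges themselves. Every $s$--$t$ path in $G-F$ passes through $u_1$, with $\dist_{G-F}(s,u_1)=5$ and $\dist_{G-F}(u_1,t)=8$, so $P$ is a shortest path of $G-F$; moreover $\dist_G(s,u_1)=5$, while $\langle s,x_1,y_1,x_2,y_2,u_1^{+}\rangle$ has length $5<6$, so the first seam is at $u_1$ and every witness path uses \emph{both} $f_1$ and $f_2$. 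Yet the tail $P[u_1..t]$ (length $8$) is not shortest in $G-\{f_2\}$ (the route $u_1\leadsto x_1$, $f_1$, $y_1\leadsto t$ has length $7$) nor in $G-\{f_1\}$ (the route $u_1,y_2$, $f_2$, $x_2\leadsto t$ has length $7$); it is shortest only when both failures are present, so no choice of $e^{\star}$ lets the recursion drop to $k-1$ failures. The lemma itself of course still holds here ($P$ is the concatenation of the three $G$-shortest paths $P[s..u_1]$, $(u_1,u_1^{+})$, $P[u_1^{+}..t]$), so the defect is in the proposed induction, not the statement; making the seam-to-failure charges land on distinct failures genuinely requires a different mechanism than restoring one failed edge and recursing on the tail, and that is precisely the part of the cited Afek et al.\ argument that your sketch does not supply.
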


In other words, the replacement path can be defined using so called pivot nodes that specify at which nodes in the graph the shortest paths may be stitched together.  In this work we are interested in the failure of a single node, which is equivalent to the failure of its incident edges.  The number of concatenations (and with that the number of corresponding pivots) required to obtain the replacement path then depends on the degree of the failed node.  While in real-world networks the average degree is often rather small, finding suitable pivots remains a hard task.  To overcome this problem, we consider an approximate setting, where we allow for a slack in the quality of the obtained paths (they may be longer than a shortest replacement path) but where only one pivot node is used. From the theoretical perspective, the following lemma 
is a special case of Lemma~\ref{lem:k-fails} for the case of a single edge failure.

\begin{lemma} \label{lemma:pivot_node}
After an edge failure in an unweighted undirected graph, each new shortest path is the concatenation of at most two original shortest paths.
\end{lemma}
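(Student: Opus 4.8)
The plan is to specialise the greedy-prefix argument behind Lemma~\ref{lem:k-fails} to the case of a single failure. Write $G' = G - e$ for the graph after the failure of the edge $e = (x,y)$, fix endpoints $s,t$ still connected in $G'$ (otherwise there is no new shortest path and nothing to prove), and let $P = (v_0,\dots,v_\ell)$ be a shortest $s$-$t$ path in $G'$, so $\ell = d_{G'}(s,t)$. If $\ell = d_G(s,t)$, then $P$ is a path of optimal length in $G$ that happens to avoid $e$, hence already a shortest path in $G$, and we are done with a single ``piece''. So the interesting case is $\ell > d_G(s,t)$, which immediately forces every shortest $s$-$t$ path of $G$ to use $e$.

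In that case I would take the pivot to be $v_{j^*}$, where $j^*$ is the largest index for which the prefix $(v_0,\dots,v_{j^*})$ is a shortest path in $G$ (equivalently $d_G(s,v_{j^*}) = j^*$); this is well defined, since $j=0$ qualifies while $j=\ell$ does not. It then suffices to show that the complementary suffix $(v_{j^*},\dots,v_\ell)$ is likewise a shortest path in $G$. Assume it is not. The routine tool here is a surgery argument: if some shortest path of $G$ between the relevant pair of vertices avoided $e$, splicing it onto the ``clean'' part of $P$ would yield an $s$-$t$ walk in $G'$ of length at most $\ell-1$, which is impossible. Applying this to the pair $(v_{j^*},t)$ (using that the suffix is assumed non-optimal) and to the pair $(s,v_{j^*+1})$ (using that maximality of $j^*$ forces $d_G(s,v_{j^*+1}) \le j^*$) shows that every shortest path of $G$ from $v_{j^*}$ to $t$, and every shortest path from $s$ to $v_{j^*+1}$, must pass through $e$. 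A more naive approach, reconnecting $x$ to $y$ cheaply inside $G'$, does not work, because $d_{G'}(x,y)$ can be arbitrarily large.

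The crux --- and the step I expect to need the most care --- is converting ``all these shortest paths funnel through the one edge $e$'' into a numerical contradiction. Fixing the orientation of $e$ so that $x$ is the endpoint closer to $s$ (hence $d_G(s,y) = d_G(s,x)+1$; set $\alpha := d_G(s,x)$), one checks that each of the shortest paths above is forced to cross $e$ from $x$ to $y$: crossing it the other way contradicts $d_G(s,y) = \alpha+1$ or the symmetric fact that $y$ is the endpoint of $e$ closer to $t$. Reading off sub-path lengths then gives, on one side, $d_G(v_{j^*},y) = d_G(v_{j^*},x)+1 \ge (j^*-\alpha)+1$ via the triangle inequality $d_G(v_{j^*},x) \ge d_G(s,v_{j^*}) - d_G(s,x)$; and on the other side, since a shortest $s$-to-$v_{j^*+1}$ path decomposes through $e$ as $\alpha + 1 + d_G(y,v_{j^*+1})$ with total length at most $j^*$, we obtain $d_G(y,v_{j^*+1}) \le j^*-\alpha-1$, and therefore $d_G(v_{j^*},y) \le 1 + d_G(v_{j^*+1},y) \le j^*-\alpha$ since $v_{j^*}$ and $v_{j^*+1}$ are adjacent. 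These two estimates on $d_G(v_{j^*},y)$ cannot both hold, and this contradiction establishes the claim; then $P = (v_0,\dots,v_{j^*}) \circ (v_{j^*},\dots,v_\ell)$ is the desired concatenation of two original shortest paths. The only real subtlety is the orientation bookkeeping for $e$ and the choice of which triangle inequalities to invoke; the remaining arithmetic is elementary.
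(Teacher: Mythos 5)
Your proof is correct, and it shares the paper's skeleton up to the crux: like the paper, you take the pivot to be the last vertex $v_{j^*}$ whose prefix along the replacement path is still a shortest path in $G$, assume toward a contradiction that the complementary suffix is not, and deduce that every original shortest path from $s$ to $v_{j^*+1}$ and from $v_{j^*}$ to $t$ must traverse the failed edge (a step the paper asserts rather tersely for its paths $P_1$ and $P_2$, and which you justify cleanly by splicing such a path onto the untouched prefix or suffix of the replacement path to beat $d_{G'}(s,t)$). Where you genuinely diverge is the final contradiction. The paper performs explicit path surgery: it builds $P_1$ (a shortest $s$--$v_{i+1}$ path plus the edge back to $v_i$) and $P_2$ (a shortest $v_i$--$t$ path), and concatenates four subpaths of them around the two traversals of $f$, obtaining an $s$--$t$ walk avoiding $f$ of length $|P_1|+|P_2|-2<k$, contradicting optimality of the replacement path. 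You instead orient $e=(x,y)$ by the distance labels, argue both relevant shortest paths must cross $e$ from $x$ to $y$, and derive the incompatible bounds $d_G(v_{j^*},y)\ge j^*-\alpha+1$ and $d_G(v_{j^*},y)\le j^*-\alpha$ via triangle inequalities; I checked the orientation and arithmetic steps and they hold. Both routes use undirectedness (the paper to walk spliced subpaths backwards, you for symmetric distances), but your arithmetic finish avoids constructing the spliced walk, makes explicit why the two shortest paths must contain the failed edge (which the paper's ``as assumed by contradiction'' glosses over), and does not need the paper's standing assumption that the failure lies on the original shortest path $P(s,t)$ -- you recover the relevant orientation facts from the observation that $\ell>d_G(s,t)$ forces every shortest $s$--$t$ path through $e$. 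The paper's surgery, on the other hand, is the argument that scales directly to the $k$-failure statement of Lemma~\ref{lem:k-fails}.
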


Given $(s,t,f)$, let $P(s,t,f)$ be a shortest path from $s$ to $t$ in $G-\{f\}$. According to Lemma \ref{lemma:pivot_node} it follows that $P(s,t,f)$ is a concatenation of two original shortest paths, or in other words, there exists a pivot vertex $v$ such that $P(s,t,f)$ is the concatenation of the two shortest paths $P(s,v)$ and $P(v,t)$, where $P(s,v)$ is a shortest path from $s$ to $v$ in $G$ and $P(v,t)$ is a shortest path from $v$ to $t$ in $G$. 
Motivated by this lemma, we will assume that it is sufficient to approximate the replacement path of a node failure using a single pivot node. As mentioned above, in the remainder of this paper we show that it is possible to use deep learning to find such pivot nodes.

\section{Method} 
\label{sec:approach}

\begin{figure*}[h]
\centering
\includegraphics[width=0.9\textwidth]{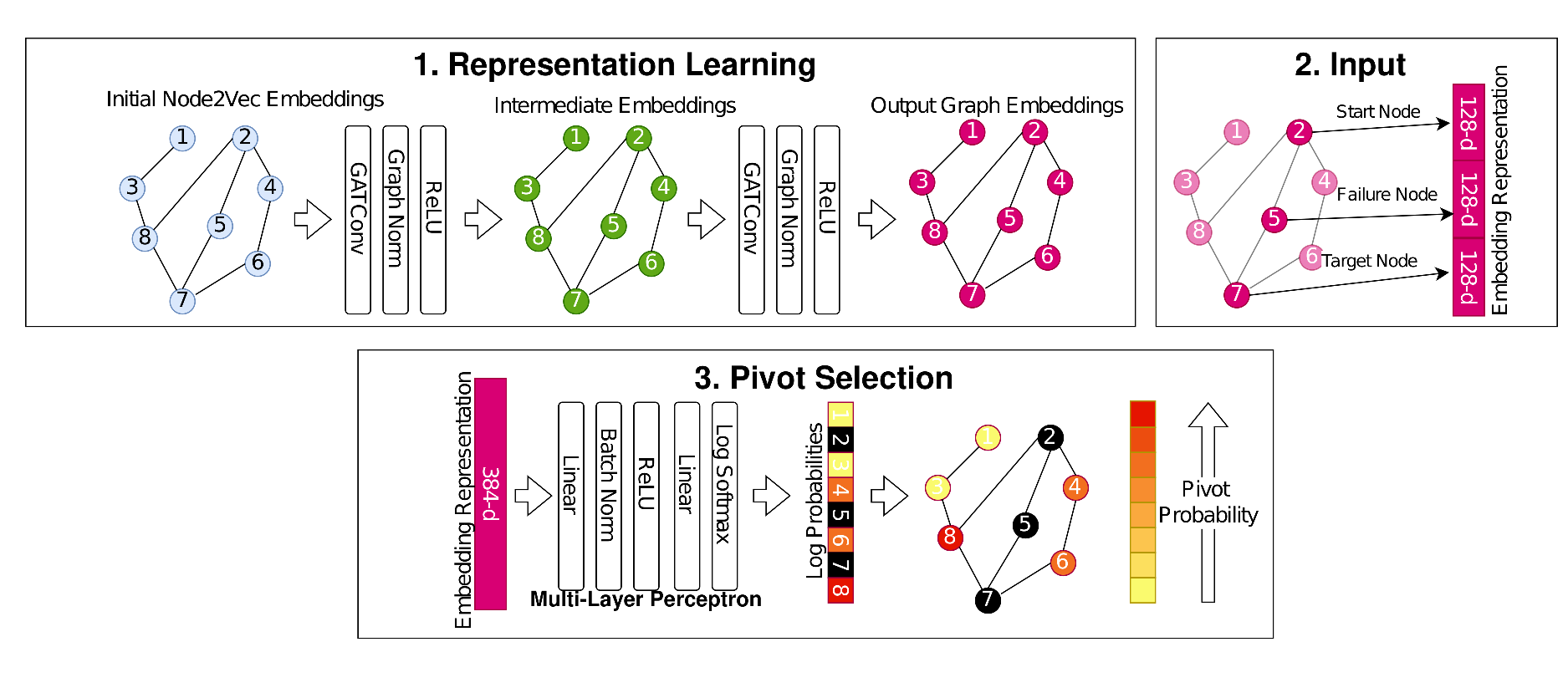}
\caption{The overall neural network architecture. During back-propagation, the gradient is passed through the MLP and to the relevant GAT parameters, so that the representations learned to encode relevant task-specific attributes.}
\label{fig:modelarchitecture}
\end{figure*}

By the above argumentation, we reduce the problem of finding the shortest replacement path to finding pivot candidates. In the following section, we describe how we use a graph convolutional network to encode relevant graph information and a multi-layer perceptron to select pivot nodes. 

\subsection{Training Data}

To generate our training data, we first generated training triplets $(s, t, f)$, representing start, target, and failed nodes. $s$ and $t$ were rejected if not connected or if direct neighbors. For each input triplet, we simulated the node failure, computed replacement paths, and determined the pivot nodes. 

$min(10^5, n^2)$ input triplets were randomly sampled for a graph with $n$ nodes (Section~\ref{table:datasets}). An 80/10/10 split was used for training, validation, and testing sets. Though previous works using landmark labelling selected $l(n-l)$ samples to improve the quality of their candidates, we found that capping our dataset by a constant value was sufficient \cite{huang2021learning,rizi2018shortest}. This value can be modified based on the input graph if desired. 

\subsection{Representation Learning with Graph Convolutional Networks}

In order to learn representations from not only the shortest path data but also the network structure, we leverage the Graph Attention (GAT) layer. This attention-based convolutional layer computes each nodes' embeddings by implicitly learning different importance to different nodes within a neighborhood \cite{velivckovic2017graph}. While they have been used successfully for tasks like biological link prediction and text classification, to the best of our knowledge, we are the first to apply the GAT mechanism to shortest path problems. 

Like previous deep learning for shortest paths works, we initialize our node embeddings using node2vec\cite{grover2016node2vec,rizi2018shortest}. We use 16 iterations, a walk length of 10, num walks of 80, $p = 1$, $q = 1$, and window size of 10. Next, we use two GAT layers to refine our embeddings towards our task. We introduce non-linearities using a ReLU layer, and we speed up convergence using GraphNorm layers, a normalization technique that was recently found to be more effective than similar techniques for training graph neural networks \cite{cai2021graphnorm}. We also use dropout (0.1) to deal with overfitting. 

\subsection{Multi-Layer Perceptron}

Multiple pivot nodes are possible, so we frame the task of pivot selection as a multi-layer classification task. As shown in Figure \ref{fig:modelarchitecture}, after the GAT component of our model, we select the three embeddings corresponding the start, target, and failing node triplet. These are flattened into a 384-sized vector, which serves as the input for our feedforward neural network. 

The multilayer perceptron consists of two linear layers and an output layer. We use ReLU layers as an activation function between the first two layers to introduce non-linearities. We also use BatchNorm layers to enable faster convergence and introduce regularization. The output layer is a log-softmax layer (commonly used for multi-label classification) that normalizes the vector values. Our final output is a $n$-sized vector, representing the log likelihoods of each node in the network being a pivot node. 

\subsection{Summary}

In total, our neural network consists of a representation learning module, based both on characteristics of the network and our DSO task, and a pivot selection module, a simple MLP (Figure~\ref{fig:modelarchitecture}). Since multiple pivot nodes are possible, the model was trained using BCELoss and the node with the highest likelihood was selected as the output. We trained our neural network for 32 epochs using the Adam optimizer \cite{kingma2014adam}. Our learning rate is 1e-3, our betas are 0.9 and 0.999, and our epsilon value is 1e-8.  

\section{Experiments}
\label{sec:experiments} 

As mentioned previously, to the best of our knowledge, we are the first to propose a deep learning approach to the DSO problem. Thus, we will evaluate our proposed method against the state-of-the-art deep learning approaches to the shortest paths problem: namely, ndist2vec and node2vec-Sg \cite{chen2022ndist2vec,rizi2018shortest}. 

We ran the authors' implementations for all comparison models with their recommended hyperparameter settings. All embeddings were 128-dimensional, in line with previous shortest paths works \cite{grover2016node2vec,qi2020learning,rizi2018shortest}. 

All experiments were implemented in Python and run on a Quadro RTX 8000 and an Intel(R) Xeon(R) Silver 4214R CPU @ 2.40GHz.

\subsection{Datasets}

\begin{table}[]
\centering
\resizebox{0.5\textwidth}{!}{%
\begin{tabular}{lcccc}
\hline
\multicolumn{1}{c}{\textbf{Network Name}} &
  \multicolumn{1}{c}{\textbf{Nodes}} &
  \multicolumn{1}{c}{\textbf{Density}} &
  \multicolumn{1}{c}{\textbf{\thead{Average\\ Degree}}} &
  \multicolumn{1}{c}{\textbf{\thead{Dataset\\ Size}}} \\ \hline
chem-ENZYMES-g118          & 95      & 2.71E-02 & 2.547   & 9.03E+03 \\
chem-ENZYMES-g296          & 125     & 1.82E-02 & 2.256   & 1.56E+04 \\
infect-dublin              & 410     & 3.30E-02 & 13.488  & 1.00E+05 \\
bio-celegans               & 453     & 1.98E-02 & 8.940   & 1.00E+05 \\
bn-mouse-kasthuri-graph-v4 & 987     & 3.16E-03 & 3.112   & 1.00E+05 \\
can-1072                   & 1,072   & 1.18E-02 & 12.608  & 1.00E+05 \\
scc\_retweet               & 1,150   & 9.98E-02 & 114.713 & 1.00E+05 \\
power-bcspwr09             & 1,723   & 2.78E-03 & 4.779   & 1.00E+05 \\
inf-openflights            & 2,905   & 3.71E-03 & 10.771  & 1.00E+05 \\
inf-power                  & 4,941   & 5.40E-04 & 2.669   & 1.00E+05 \\
ca-Erdos992                & 4,991   & 5.97E-04 & 2.977   & 1.00E+05 \\
power-bcspwr10             & 5,300   & 9.66E-04 & 5.121   & 1.00E+05 \\
bio-grid-yeast             & 6,008   & 8.70E-03 & 52.245  & 1.00E+05 \\
soc-gplus                  & 23,613  & 1.41E-04 & 3.319   & 1.00E+05 \\
ia-email-EU                & 32,430  & 1.03E-04 & 3.355   & 1.00E+05 \\
ia-wiki-Talk               & 92,117  & 8.50E-05 & 7.833   & 1.00E+05 \\
dbpedia-occupation         & 127,569 & 3.08E-05 & 3.934   & 1.00E+05 \\
tech-RL-caida              & 190,914 & 3.33E-05 & 6.365   & 1.00E+05 \\ \hline
\end{tabular}%
}
\caption{Statistics of real-world networks.}
\label{table:datasets}
\end{table}

We test our method on a variety of undirected, unweighted networks from the Network Repository~\cite{nr}, which covers a diverse set of areas, such as road networks, biological networks, and communication networks, all representing potential input for real-world applications.  For a list of all considered networks, we refer to Table~\ref{table:datasets}. 

The training schema of each comparison model was used, but the testing dataset was standardized for the purposes of a fair comparison. Specifically, $min(10^5, n^2) * 0.10$ node pairs $(a, b)$ were randomly sampled without replacement from each network, such that $a \neq b$. 

\subsection{Evaluation}

In line with previous works computing shortest paths with deep learning, we evaluate our method using the \textbf{Mean Relative Error (MRE)} metric. Let $\hat{d}_{i, j}$ denote the predicted distance and $d_{i, j}$ denote the actual distance. The Relative Error is then given by 

\begin{equation*}
    RE = \frac{|\hat{d}_{i, j} - d_{i, j}|}{d_{i, j}}
\end{equation*}

We note that, for evaluations of DSOs, $\hat{d}_{i, j}$ and $d_{i, j}$ denote the predicted and actual distances on the graph after a node failure. 

We also provide the representation factor, denoting the ratio of the MRE obtained with random pivots and the one obtained using our method, as a metric for evaluating the quality of our representations.

\section{Results}
\label{sec:results}

\begin{table}[]
\centering
\resizebox{\textwidth}{!}{%
\begin{tabular}{lccccc}
\hline
\textbf{Network Name} &
  \multicolumn{1}{l}{\textbf{\thead{MRE\\ (Ndist2vec)}}} &
  \multicolumn{1}{l}{\textbf{\thead{MRE\\ (Rizi)}}} &
  \multicolumn{1}{l}{\textbf{\thead{MRE\\ (Random Pivots)}}} &
  \multicolumn{1}{l}{\textbf{\thead{Representation\\ Factor}}} &
  \multicolumn{1}{l}{\textbf{\thead{MRE\\ (Our Method)}}} \\ \hline
chem-ENZYMES-g118          & 100.00\%             & 14.19\%  & 215.96\% & 811.89   & 0.27\% \\
chem-ENZYMES-g296          & 100.00\%             & 7.85\%   & 257.69\% & 530.22   & 0.49\% \\
infect-dublin              & 38.33\%              & 13.58\%  & 191.02\% & 1,091.55 & 0.18\% \\
bio-celegans               & 15.45\%              & 9.84\%   & 176.28\% & 4,299.51 & 0.04\% \\
bn-mouse-kasthuri-graph-v4 & 17.15\%              & 16.91\%  & 204.20\% & 5,105.11 & 0.04\% \\
can-1072                   & 36.43\%              & 11.23\%  & 211.58\% & 573.40   & 0.37\% \\
scc\_retweet               & 100.00\%             & 10.78\%  & 167.64\% & 3,287.11 & 0.05\% \\
power-bcspwr09             & 42.75\%              & 20.45\%  & 237.46\% & 1,493.46 & 0.16\% \\
inf-openflights            & 100.00\%             & 16.32\%  & 202.05\% & 280.63   & 0.72\% \\
inf-power                  & 59.08\%              & 31.67\%  & 228.67\% & 1,013.10 & 0.23\% \\
ca-Erdos992                & 100.00\%             & 18.93\%  & 206.26\% & 630.77   & 0.33\% \\
power-bcspwr10             & 37.02\%              & 31.95\%  & 232.83\% & 739.13   & 0.32\% \\
bio-grid-yeast             & 14.67\%              & 15.95\%  & 173.80\% & 27.80    & 6.25\% \\
soc-gplus                  & 100.00\%             & 55.99\%  & 200.20\% & 654.61   & 0.31\% \\
ia-email-EU                & 13.31\%              & 119.61\% & 202.93\% & 216.11   & 0.94\% \\
ia-wiki-Talk               & 20.83\%              & 28.19\%  & 203.29\% & 26.50    & 7.67\% \\
dbpedia-occupation         & \multicolumn{1}{c}{} & 28.41\%  & 205.07\% & 31.56    & 6.50\% \\
tech-RL-caida &
  \multicolumn{1}{c}{\multirow{-2}{*}{Did not finish}} &
  51.92\% &
  206.02\% &
  26.04 &
  7.91\% \\ \hline
\end{tabular}%
}
\caption{Results of models across several real-life
networks.}
\label{table:results}
\end{table}

We aimed to evaluate our deep learning approach across the following questions:

\begin{enumerate}
    \item How much longer than the optimal paths are our replacement paths?
    \item How does our deep learning model compare with previous state-of-the-art shortest paths works?
    \item Is the resulting performance an achievement of our approach or merely an artifact of the structure of the input graph?
\end{enumerate}

The motivation behind the first question is obvious. Computed replacement paths are only suitable if they are not much longer than the actual shortest paths in the graph. Table~\ref{table:results} lists the MRE values obtained on all considered networks. Except for \textbf{bio-grid-yeast}, \textbf{ia-wiki-Talk}, \textbf{tech-RL-caida}, and \textbf{dbpedia-occupation} all MRE values are below $1\%$, with a mean value of $1.82\%$ and a median value of $0.32\%$. This means that the computed replacement paths were almost as short as the optimal replacement paths. Interestingly, the networks with the highest MRE values were not those with the highest density or average degree. For instance, our model obtained an MRE less than $1\%$ \textbf{soc-gplus} and \textbf{ia-email-EU} but struggled with the comparably smaller \textbf{bio-grid-yeast} network at $6.25\%$. These insights suggest that other factors related to the networks' structures affected our models' performance

Given the lack of DSOs using deep learning, the second question hopes to understand how our model performs compared to methods finding the shortest paths without node failures. Across all networks, we were able to match or outperform the state-of-the-art shortest paths works, often by several degrees of magnitude (Table~\ref{table:results}). We did so with less training cases (numerically and proportionally) and no special selection process. In doing so, we demonstrated that deep learning can be used to effectively find the shortest replacement paths as well. 

We'd like to note that we used the authors' implementation of ndist2vec and confirmed its performance on the road network datasets presented in the original paper \cite{chen2022ndist2vec}. Nonetheless, the model performed significantly worse on our real-world networks and did not finish after a week of computation for the two largest networks. We have two potential explanations. First, the authors suggest that their landmark-labelling approach will not scale well to sparse, large networks, many of which were used in our experiments. Second, the model often became stuck on local optima during training, producing a MRE value of $100\%$ corresponding to a constant ouput of $0$, demonstrating a reliance on initial values. For a fair comparison, we depict the best MRE values after two runs in Table~\ref{table:results}.

Finally, we aimed to determine whether our model performed well because of the model or because of the inherent structure of the networks. For example, consider a graph that is almost a clique (almost all vertices are pairwise connected).  Then, all paths are short and after a failure (having almost no impact on the graph structure) most replacement paths are short as well. In this setting almost any node can serve as a suitable pivot, yielding a replacement path with a small stretch and one would expect that even randomly chosen pivots would yield good results.  While the networks considered in our experiments are not as dense (see Table~\ref{table:results}), other graph properties like a small diameter may make finding good pivots easier.

Table~\ref{table:results} lists the MRE values we get when considering random pivots, which we obtained by replacing the output of our pipeline with random noise. As can be clearly seen, the MRE is much larger in this setting.  For most networks the MRE is larger than $200\%$, meaning the found paths are more than $3$ times longer than the shortest replacement paths.  In order to compare our method with the random approach, Table~\ref{table:results} also lists the representation factor.  Except for \textbf{bio-grid-yeast}, \textbf{ia-wiki-Talk}, and \textbf{tech-RL-caida} this factor is always larger than $200$, meaning on most networks our approach is over $200$ times better than the random method, clearly indicating that the close to optimal performance is due to the quality of our approach and not an artifact of properties of the considered inputs.

\section{Conclusion} 

We have shown that distance sensitivity oracles with close to optimal performance can be obtained by utilizing the power of deep learning.  Our method builds on a combinatorial property that allows for finding replacement paths based on pivot vertices.  On a variety of real-world networks in the presence of failures, we can reliably find suitable pivots where the lengths of the corresponding replacement paths are very close to those of optimal paths.  Moreover, our experiments suggest that these results are not artifacts of the inherent structure of the inputs, but are instead based on the fact that the different building blocks of our pipeline successfully capture the relevant structural information about the input graph.

As a consequence, it would be interesting to apply this method to related tasks where similar structural information needs to be captured.  One such example is \emph{local routing}, where the goal is to find short paths in a graph without the use of a central data structure by greedily routing to nearby embeddings. Prior work has shown that close to optimal greedy routing can be performed when embedding networks into hyperbolic space~\cite{blasius2020greedy}.  However, the resulting embeddings were susceptible to numerical inaccuracies, and network failures decreased routing performance a lot.  It would thus be interesting to see whether our approach can be extended to the greedy routing setting as well, in order to overcome the previously observed issues.

Additionally, our approach has currently not been tested on larger networks containing millions of nodes. By calculating the APSP information using a distance oracle and using an improved node2vec implementation, we plan to test our networks' scalability in the future. 

%
%

\bibliographystyle{spmpsci} 
\bibliography{my_bib} 

\begin{thebibliography}{10}
\providecommand{\url}[1]{{#1}}
\providecommand{\urlprefix}{URL }
\expandafter\ifx\csname urlstyle\endcsname\relax
  \providecommand{\doi}[1]{DOI~\discretionary{}{}{}#1}\else
  \providecommand{\doi}{DOI~\discretionary{}{}{}\begingroup \urlstyle{rm}\Url}\fi

\bibitem{MPLS}
Afek, Y., Bremler{-}Barr, A., Kaplan, H., Cohen, E., Merritt, M.: Restoration by path concatenation: fast recovery of {MPLS} paths.
\newblock Distributed Comput. \textbf{15}(4), 273--283 (2002).
\newblock \doi{10.1007/s00446-002-0080-6}

\bibitem{BaswanaK13}
Baswana, S., Khanna, N.: {Approximate Shortest Paths Avoiding a Failed Vertex: Near Optimal Data Structures for Undirected Unweighted Graphs}.
\newblock Algorithmica \textbf{66}, 18--50 (2013).
\newblock \urlprefix\url{https://doi.org/10.1007/s00453-012-9621-y}

\bibitem{BeKa09}
Bernstein, A., Karger, D.R.: A nearly optimal oracle for avoiding failed vertices and edges.
\newblock In: Proceedings of the 41st Annual {ACM} Symposium on Theory of Computing, {STOC} 2009, Bethesda, MD, USA, May 31 - June 2, 2009, pp. 101--110. {ACM} (2009).
\newblock \doi{10.1145/1536414.1536431}

\bibitem{billand2016network}
Billand, P., Bravard, C., Iyengar, S.S., Kumar, R., Sarangi, S.: Network connectivity under node failure.
\newblock Economics Letters \textbf{149}, 164--167 (2016)

\bibitem{blasius2020greedy}
Bl\"{a}sius, T., Friedrich, T., Katzmann, M., Krohmer, A.: Hyperbolic embeddings for near-optimal greedy routing.
\newblock ACM J. Exp. Algorithmics \textbf{25} (2020).
\newblock \doi{10.1145/3381751}.
\newblock \urlprefix\url{https://doi.org/10.1145/3381751}

\bibitem{cai2018comprehensive}
Cai, H., Zheng, V.W., Chang, K.C.C.: A comprehensive survey of graph embedding: Problems, techniques, and applications.
\newblock IEEE Transactions on Knowledge and Data Engineering \textbf{30}(9), 1616--1637 (2018)

\bibitem{cai2021graphnorm}
Cai, T., Luo, S., Xu, K., He, D., Liu, T.y., Wang, L.: Graphnorm: A principled approach to accelerating graph neural network training.
\newblock In: International Conference on Machine Learning, pp. 1204--1215. PMLR (2021)

\bibitem{ChechikC20}
Chechik, S., Cohen, S.: Distance sensitivity oracles with subcubic preprocessing time and fast query time.
\newblock In: Proccedings of the 52nd Annual {ACM} {SIGACT} Symposium on Theory of Computing, {STOC} 2020, Chicago, IL, USA, June 22-26, 2020, pp. 1375--1388. {ACM} (2020).
\newblock \doi{10.1145/3357713.3384253}

\bibitem{CLPR10}
Chechik, S., Langberg, M., Peleg, D., Roditty, L.: {$f$-Sensitivity Distance Oracles and Routing Schemes}.
\newblock Algorithmica \textbf{63}, 861--882 (2012).
\newblock \doi{10.1007/s00453-011-9543-0}

\bibitem{chen2022ndist2vec}
Chen, X., Wang, S., Li, H., Lyu, F., Liang, H., Zhang, X., Zhong, Y.: Ndist2vec: Node with landmark and new distance to vector method for predicting shortest path distance along road networks.
\newblock ISPRS International Journal of Geo-Information \textbf{11}(10), 514 (2022)

\bibitem{crichton2018neural}
Crichton, G., Guo, Y., Pyysalo, S., Korhonen, A.: Neural networks for link prediction in realistic biomedical graphs: a multi-dimensional evaluation of graph embedding-based approaches.
\newblock BMC bioinformatics \textbf{19}(1), 1--11 (2018)

\bibitem{cvetkovski2009hyperbolic}
Cvetkovski, A., Crovella, M.: Hyperbolic embedding and routing for dynamic graphs.
\newblock In: IEEE INFOCOM 2009, pp. 1647--1655. IEEE (2009)

\bibitem{DemetrescuTCR08}
Demetrescu, C., Thorup, M., Chowdhury, R.A., Ramachandran, V.: Oracles for distances avoiding a failed node or link.
\newblock {SIAM} J. Comput. \textbf{37}(5), 1299--1318 (2008).
\newblock \doi{10.1137/S0097539705429847}

\bibitem{DuZh17}
Duan, R., Zhang, T.: Improved distance sensitivity oracles via tree partitioning.
\newblock In: Algorithms and Data Structures - 15th International Symposium, {WADS} 2017, St. John's, NL, Canada, July 31 - August 2, 2017, Proceedings, \emph{Lecture Notes in Computer Science}, vol. 10389, pp. 349--360. Springer (2017).
\newblock \doi{10.1007/978-3-319-62127-2\_30}

\bibitem{grover2016node2vec}
Grover, A., Leskovec, J.: node2vec: Scalable feature learning for networks.
\newblock In: Proceedings of the 22nd ACM SIGKDD international conference on Knowledge discovery and data mining, pp. 855--864 (2016)

\bibitem{GuR21}
Gu, Y., Ren, H.: {Constructing a Distance Sensitivity Oracle in $O(n^2.5794 M)$ Time}.
\newblock In: N.~Bansal, E.~Merelli, J.~Worrell (eds.) 48th International Colloquium on Automata, Languages, and Programming (ICALP 2021), \emph{Leibniz International Proceedings in Informatics (LIPIcs)}, vol. 198, pp. 76:1--76:20. Schloss Dagstuhl -- Leibniz-Zentrum f{\"u}r Informatik, Dagstuhl, Germany (2021).
\newblock \doi{10.4230/LIPIcs.ICALP.2021.76}.
\newblock \urlprefix\url{https://drops.dagstuhl.de/opus/volltexte/2021/14145}

\bibitem{huang2021learning}
Huang, S., Wang, Y., Zhao, T., Li, G.: A learning-based method for computing shortest path distances on road networks.
\newblock In: 2021 IEEE 37th International Conference on Data Engineering (ICDE), pp. 360--371. IEEE (2021)

\bibitem{jindal2017unified}
Jindal, I., Chen, X., Nokleby, M., Ye, J., et~al.: A unified neural network approach for estimating travel time and distance for a taxi trip.
\newblock arXiv preprint arXiv:1710.04350  (2017)

\bibitem{kingma2014adam}
Kingma, D.P., Ba, J.: Adam: A method for stochastic optimization.
\newblock arXiv preprint arXiv:1412.6980  (2014)

\bibitem{LincolnWW18}
Lincoln, A., Williams, V.V., Williams, R.R.: Tight hardness for shortest cycles and paths in sparse graphs.
\newblock In: Proceedings of the Twenty-Ninth Annual {ACM-SIAM} Symposium on Discrete Algorithms, {SODA} 2018, New Orleans, LA, USA, January 7-10, 2018, pp. 1236--1252. {SIAM} (2018).
\newblock \doi{10.1137/1.9781611975031.80}

\bibitem{qi2020learning}
Qi, J., Wang, W., Zhang, R., Zhao, Z.: A learning based approach to predict shortest-path distances.
\newblock In: EDBT, pp. 367--370 (2020)

\bibitem{Ren22}
Ren, H.: Improved distance sensitivity oracles with subcubic preprocessing time.
\newblock J. Comput. Syst. Sci. \textbf{123}, 159--170 (2022).
\newblock \doi{10.1016/j.jcss.2021.08.005}

\bibitem{ren2018shortest}
Ren, Y., Ay, A., Kahveci, T.: Shortest path counting in probabilistic biological networks.
\newblock BMC bioinformatics \textbf{19}(1), 1--19 (2018)

\bibitem{rizi2018shortest}
Rizi, F.S., Schloetterer, J., Granitzer, M.: Shortest path distance approximation using deep learning techniques.
\newblock In: 2018 IEEE/ACM International Conference on Advances in Social Networks Analysis and Mining (ASONAM), pp. 1007--1014. IEEE (2018)

\bibitem{nr}
Rossi, R.A., Ahmed, N.K.: The network data repository with interactive graph analytics and visualization.
\newblock In: AAAI (2015).
\newblock \urlprefix\url{https://networkrepository.com}

\bibitem{tian2012shortest}
Tian, X., Song, Y., Wang, X., Gong, X.: Shortest path based potential common friend recommendation in social networks.
\newblock In: 2012 Second International Conference on Cloud and Green Computing, pp. 541--548. IEEE (2012)

\bibitem{ukkonen2008searching}
Ukkonen, A., Castillo, C., Donato, D., Gionis, A.: Searching the wikipedia with contextual information.
\newblock In: Proceedings of the 17th ACM conference on Information and knowledge management, pp. 1351--1352 (2008)

\bibitem{velivckovic2017graph}
Veli{\v{c}}kovi{\'c}, P., Cucurull, G., Casanova, A., Romero, A., Lio, P., Bengio, Y.: Graph attention networks.
\newblock arXiv preprint arXiv:1710.10903  (2017)

\bibitem{wang2002orion}
Wang, H.S., Zhu, X., Peh, L.S., Malik, S.: Orion: A power-performance simulator for interconnection networks.
\newblock In: 35th Annual IEEE/ACM International Symposium on Microarchitecture, 2002.(MICRO-35). Proceedings., pp. 294--305. IEEE (2002)

\bibitem{zhang2021graphnet}
Zhang, X., Mu, J., Liu, H., Zhang, X.: Graphnet: Graph clustering with deep neural networks.
\newblock In: ICASSP 2021-2021 IEEE International Conference on Acoustics, Speech and Signal Processing (ICASSP), pp. 3800--3804. IEEE (2021)

\bibitem{zhao2011efficient}
Zhao, X., Sala, A., Zheng, H., Zhao, B.Y.: Efficient shortest paths on massive social graphs.
\newblock In: 7th International Conference on Collaborative Computing: Networking, Applications and Worksharing (CollaborateCom), pp. 77--86. IEEE (2011)

\end{thebibliography}

\end{document}